\definecolor{promptgray}{RGB}{245,245,245}
\definecolor{toolblue}{RGB}{235,242,250}
\definecolor{faithgreen}{RGB}{236,246,240}
\newtheorem{corollary}{Corollary}
\newtheorem{lemma}{Lemma}
\title{Faithful-First Reasoning, Planning, and Acting for Multimodal LLMs}
\author{
  Junxian Li\textsuperscript{1}\thanks{Equal contribution.} \quad
  Xinyue Xu\textsuperscript{2}\footnotemark[1] \quad
  Sai Ma\textsuperscript{3}\footnotemark[1] \quad
  % Lexing Xie\textsuperscript{3} \quad
  Di Zhang\textsuperscript{4} \quad
  Sichao Li\textsuperscript{5}\thanks{Correspondence: \href{mailto:sichao.li@sydney.edu.au}{sichao.li@sydney.edu.au}}
  \\
  \textsuperscript{1}Shanghai Jiao Tong University \\
  \textsuperscript{2}The Hong Kong University of Science and Technology \\
  \textsuperscript{3}The Australian National University 
    \textsuperscript{4} Fudan University 
  \textsuperscript{5}University of Sydney \\
  \texttt{lijunxian0531@sjtu.edu.cn, xinyue.xu@connect.ust.hk, sai.ma@anu.edu.au} \\ \texttt{di.zhang@ustc.edu, sichao.li@sydney.edu.au}
}
\newcommand{\dataset}[1]{\textsc{#1}\xspace}
\newcommand{\model}[1]{\textsc{#1}\xspace}
\newcommand{\code}[1]{\texttt{#1}\xspace}
\begin{document}
\maketitle

% Sai change from: Unfaithfulness remains a persistent challenge for large language models (LLMs), which often produce plausible yet ungrounded reasoning chains that diverge from either perceptual evidence or their final conclusions.

\begin{abstract}
Multimodal Large Language Models (MLLMs) frequently suffer from unfaithfulness, generating reasoning chains that drift from visual evidence or contradict final predictions. We propose \emph{Faithful-First Reasoning, Planning, and Acting (RPA) framework} in which \textsc{FaithEvi} provides step-wise and chain-level supervision by evaluating the faithfulness of intermediate \emph{reasoning}, and \textsc{FaithAct} uses these signals to \texttt{plan and execute} faithfulness-aware actions during inference.
Experiments across multiple multimodal reasoning benchmarks show that faithful-first RPA improves perceptual faithfulness by up to \textbf{24\%} over \emph{prompt-based} and \emph{tool-augmented} reasoning frameworks, without degrading task accuracy. 
Our analysis shows that treating \emph{faithfulness as a guiding principle} perceptually faithful reasoning trajectories and mitigates hallucination behavior.
This work thereby establishes a unified framework for both evaluating and enforcing faithfulness in multimodal reasoning. Code is at \url{https://github.com/lijunxian111/Faithful-First-RPA}.
\end{abstract}

\section{Introduction}
Despite rapid progress in multimodal large language models (MLLMs)~\cite{2023llavarlhf, chen2024expanding, an2025llava, wang2024qwen2}, their reasoning traces \emph{remain unfaithful}: models frequently produce persuasive explanations that conflict with perceptual evidence, or utilize post-hoc rationalizations fabricate their reasoning progress \cite{arcuschin2025chain, barez2025chain}. This gap poses a central challenge for trustworthy reasoning. Existing efforts typically focus on improving task accuracy or enriching \textsc{Chain-of-Thought} (CoT) generation~\cite{zhang2023multimodal}, yet the unfaithful reasoning remains unaddressed.

\paragraph{Motivation.}
We are motivated by the following principle and observations \cite{goyal2017making}.
\begin{quote}
    \textit{A perceptually faithful model reasons only over what is visually observable; it does not ``see'' what the image does not reveal.}
\end{quote}
\noindent This principle echoes long-standing findings in Visual Question Answering (VQA): systems should avoid answering beyond available evidence and resist over-reliance on language priors \cite{antol2015vqa, agrawal2018don, bender2021stochastic}.
\begin{figure*}[t]
    \centering
    \includegraphics[width=\linewidth]{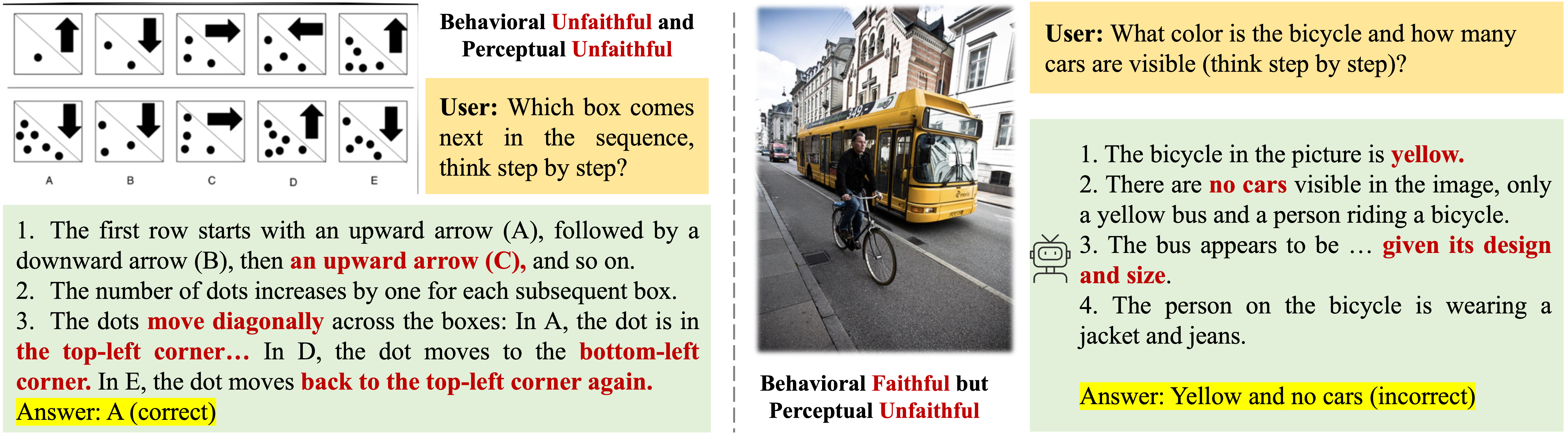}
    \caption{Perceptually and behaviorally unfaithful examples. \textit{Left}: \textbf{Perceptually unfaithful and behaviorally unfaithful.} The model presents a  step-by-step reasoning trace describing alternating arrow directions and increasing dot counts, yet such reasoning does not reflect its actual decision process. The final choice (A) is likely made through pattern association, with the explanation generated post hoc to rationalize it. \textit{Right}: \textbf{Perceptually unfaithful but behaviorally faithful}. The reasoning aligns with its final answer but incorrectly describes the bicycle as yellow, influenced by a nearby yellow bus. This illustrates a visually plausible but perception-unfaithful reasoning step, where linguistic association overrides perceptual grounding.
}
    \label{fig:motivation}
    % \vskip -0.1in
\end{figure*}
%Sai change from: As illustrated in Figure~\ref{fig:motivation}, MLLMs can produce seemingly correct answers and fluent explanations while exhibiting \textit{unfaithful} reasoning. We highlight two representative failure modes: (i) a post-hoc narrative that does not reflect the model's actual decision process; and (ii) reasoning steps that invoke objects (e.g., cars, arrows, and dots) not supported by the visual input.

As illustrated by examples in Fig.~\ref{fig:motivation}, we observe that MLLMs can generate plausible explanations that are \textbf{perceptually inconsistent} with the underlying visual input, \emph{regardless of whether the final prediction is correct}. To formalize this observation, we 
distinguish perceptual faithfulness (reasoning steps align with the model's input) from the predominant focus in prior work on behavioral faithfulness (reasoning steps align with the model's output)  
\cite{arcuschin2025chain, matton2025walk, ming2024faitheval,li2023evaluating}. 

% As illustrated in Figure~\ref{fig:motivation}, we observe that MLLMs can generate plausible explanations (unfaithful reasoning) that are inconsistent (unspported or even wrong and controdictory fact) with the underlying visual input, regardless of whether the final prediction is correct.
% We identify two primary failure cases: (i) correct prediction but unsupported explanation (ii) incorrect prediction and unsupported explanation

% post-hoc rationalization, where the explanation does not reflect the model's actual decision process; and (ii) hallucination, where reasoning steps invoke entities (e.g., cars, arrows, and dots) not supported by the visual input (incorrect prediction and wrong explanations).

\noindent {\textbf{Our answer.}} We argue that \textsc{faithfulness should be a design principle, not merely a post-hoc evaluation objective}. Reasoning frameworks should \emph{explicitly verify} the evidential grounding of each step before it is admitted into the chain, ensuring that reasoning remains both perceptually grounded and behaviorally aligned.

\paragraph{Contribution.}
This motivates the \emph{Faithful-First Reasoning, Planning, and Acting (RPA)} framework, which operationalizes the principle as a design constraint, enforcing faithfulness throughout multimodal reasoning.
Within this framework, we make the following contributions:

%\begin{itemize}
 \noindent $\bullet$ \noindent We introduce \textsc{FaithEvi}, an evidence-based perceptual faithfulness evaluation pipeline via evidence that extracts claimed objects from each reasoning step and verifies their existence at both local and global levels through preference polling and visual grounding. It assigns a step-level faithfulness score ($F_{\text{step},t}$) to each reasoning step and aggregates them into a chain-level score ($F_{\text{chain}}$), providing a general quantitative measure of perceptual faithfulness with respect to the input evidence.

    % Sai change from:
    %\item  We propose \textbf{FaithAct}, a faithful-first planning and acting mechanism that enforces evidential grounding at each reasoning step through two sequential phases: (1) \textbf{Planning}, which uses FaithEvi signals to assess chain-level faithfulness ($F_{\text{chain}}$) and determine admissible actions, and (2) \textbf{Acting}, which executes faithfulness-aware actions under a predefined faithfulness threshold.
    %The mechanism is realized via a general, lightweight, and extensible interface composed of callable functions.

\noindent $\bullet$ We propose \textsc{FaithAct}, a faithful-first planning and acting mechanism that enforces evidential grounding during inference. It operates through two sequential phases: (1) \emph{Planning}, which leverages \textsc{FaithEvi} signals to assess the current chain's faithfulness ($F_{\emph{chain}}$) and determine admissible next steps; and (2) \emph{Acting}, which executes faithfulness-aware actions subject to dynamic thresholds. The mechanism is realized via a lightweight, extensible interface of callable functions.
    
    % : \texttt{Poll()}, \texttt{Ground()}, \texttt{Select()}, \texttt{Abstain()}, and \texttt{Count()}.

    % Sai change from: 
    % \item Extensive experiments on multiple benchmarks show that current MLLMs (Qwe, InternVL3-8B, LLaVA-One-Vision-1.5-8B) consistently overestimate perceptual grounding. Enforcing step-wise verification significantly enhances perceptual faithfulness without sacrificing task accuracy. Compared to prompt-based and tool-augmented reasoning frameworks, faith-first RPA achieves the highest overall perceptual faithfulness, and effectively mitigates hallucinated content.

\noindent $\bullet$ Extensive experiments across multiple benchmarks reveal that current MLLMs consistently underestimate evidential support in reasoning process. We demonstrate that enforcing step-wise verification significantly enhances perceptual faithfulness without compromising task accuracy. Compared to standard prompt-based and tool-augmented frameworks, \emph{Faithful-First RPA} achieves the highest overall perceptual faithfulness and effectively mitigates content hallucination.
    
    %\item Extensive experiments across multiple benchmarks reveal that current MLLMs (Qwe, InternVL3-8B, and LLaVA-One-Vision-1.5-8B) consistently overestimate their perceptual grounding. Our results show that enforcing stepwise verification is crucial for enhancing perceptual without compromising task accuracy. We comprehensively benchmark against two common prompt-based (CoT, VAT) and tool-augmented (Grounded-CoT, ReAct) baselines, demonstrating that FaithAct achieves the highest overall perceptual faithfulness, providing empirical evidence of its ability to mitigate hallucination.
%\end{itemize}

\section{Perceptual and Behavioral Faithfulness}

Building on the motivating observations (Fig.~\ref{fig:motivation}) and prior analyses of explanation faithfulness (Appx.~\ref{sec:related_frameworks}), we distinguish two ``unfaithful'' notions and clarify our focus of this work.

Previous studies mainly concern whether a reasoning trace accurately reflects the decision process that led to the model's final output \cite{lyu2023faithful, matton2025walk, barez2025chain, arcuschin2025chain,li2026textbf,ma2026thinking}, which we refer to as \textbf{Behavioral faithfulness (BF)}. 
While important, it underestimates the importance of input evidence in the reasoning process. We therefore propose \textbf{Perceptual faithfulness (PF)}, which concerns whether individual reasoning steps are grounded in verifiable input evidence, such as entities and attributes that are perceptually present in the input. A perceptually unfaithful explanation may invoke objects and properties that are unsupported or contradicted by the visual input.
Importantly, the model is not required to output the correct answer to be faithful \citep{jacovi2020towards, dasgupta2022framework}. 

For instance, the left example in Fig.~\ref{fig:motivation} shows a correct prediction accompanied by reasoning that is neither perceptually grounded nor behaviorally aligned. 
The right example illustrates reasoning that aligns with the model's prediction but remains perceptually unfaithful, as it relies on visually unsupported attributes (e.g., bicycle color). While ideal reasoning would be both perceptually and behaviorally faithful and yield a correct prediction, optimizing all three remains challenging. 
In this work, we focus on \emph{perceptual faithfulness}. 

This choice is motivated by the hypothesis that when reasoning remains perceptually grounded, behavioral consistency tends to follow, without introducing additional optimization objectives such as output correctness. Practically, perceptual faithfulness offers a more actionable and operationally measurable objective, each reasoning step can be directly validated against the available multimodal evidence by inference.
In contrast, behavioral faithfulness depends on inaccessible internal dynamics of MLLMs and is  therefore treated as a consequence rather than a controllable design target.

% These two axes jointly define four possible reasoning behaviors:
% \begin{table}[h!]
% \centering
% \caption{Four possible reasoning behaviors defined by perceptual and behavioral faithfulness.}
% \begin{tabular}{c|c|c|p{8cm}}
% \hline
% \textbf{Case} & \textbf{Perceptually Faithful?} & \textbf{Behaviorally Faithful?} & \textbf{Description} \\ 
% \hline
% (1) & \checkmark & \checkmark & Reasoning steps are fully supported by perceptual evidence and correctly reflect the model’s decision process. \\ 
% \hline
% (2) & \checkmark & \xmark & Reasoning steps are evidence-grounded but lead to an incorrect or contradictory final outcome (e.g., all observations are valid but the logical conclusion is wrong). \\ 
% \hline
% (3) & \xmark & \checkmark & Reasoning steps are consistent with the model’s internal decision pathway but rely on unsupported or hallucinatory evidence. \\ 
% \hline
% (4) & \xmark & \xmark & Reasoning steps are neither evidence-grounded nor aligned with the model’s decision process, reflecting both perceptual and behavioral unfaithfulness. \\ 
% \hline
% \end{tabular}
% \label{tab:faithfulness_cases}
% \end{table}

\section{\textsc{FaithEvi}: Perceptual Faithfulness Evaluation Pipeline}
\begin{figure*}[t]
    \centering
    \includegraphics[width=\linewidth]{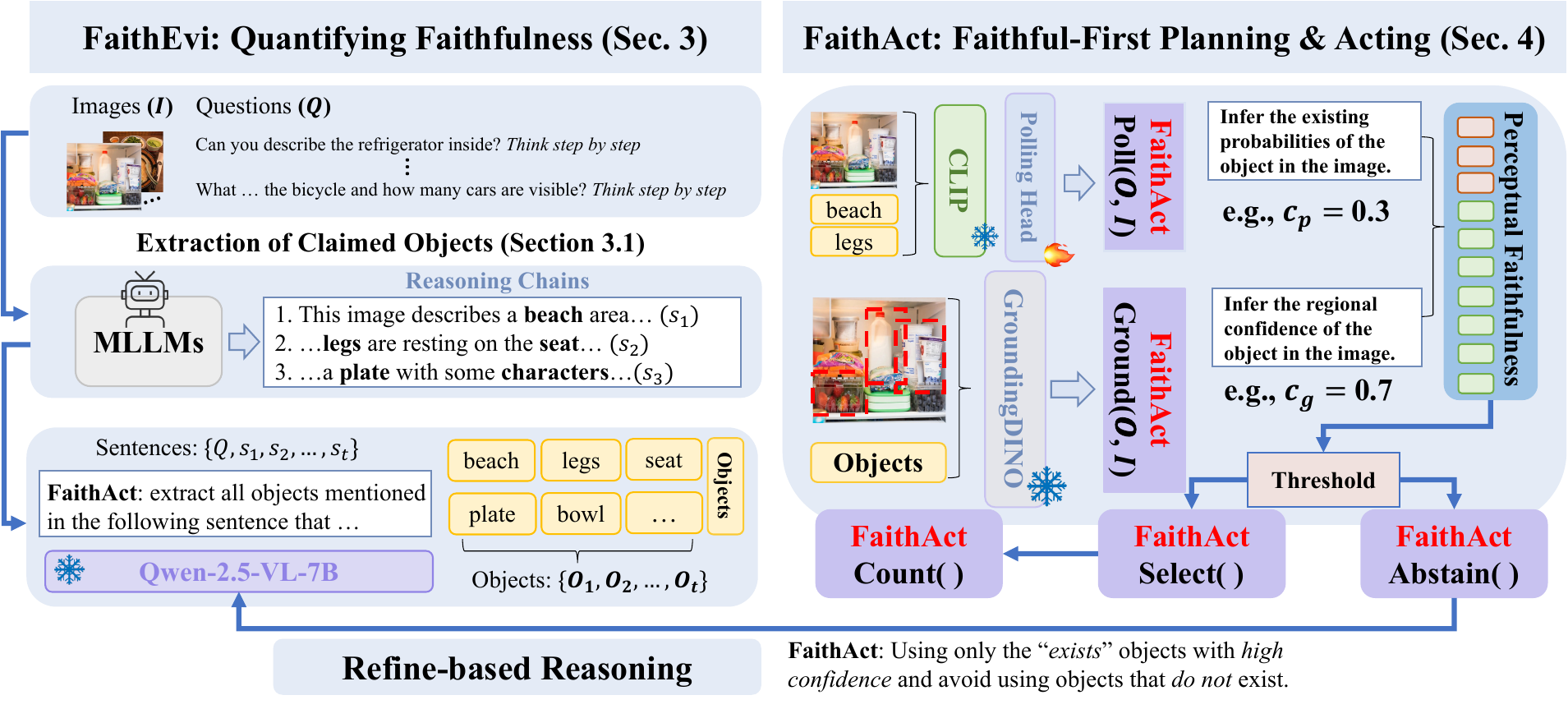}
    \caption{\textbf{Faithful-first reasoning, planning, and acting framework.} Given an image-question pair, \textsc{FaithEvi} evaluates the perceptual faithfulness of intermediate \textbf{reasoning}, producing step- and chain-level faithfulness scores. Guided by these signals, \textsc{FaithAct} \emph{plans} and \emph{acts} faithfulness-aware actions during inference.}
    \label{fig:overall_framework}
    % \vskip -0.15in
\end{figure*}

\noindent We operationalize perceptual faithfulness by first evaluating how well each reasoning step aligns with the input verifiable evidence. 
To this end, we design a general-purpose pipeline that systematically evaluates the degree of perceptual grounding across reasoning process at step-wise and chain-level.
The pipeline consists of three key stages: (i) \emph{Extraction of Claimed Objects} (ii) \emph{Preference Polling and Grounding} (iii) \emph{Faithfulness Scoring}. 
%\begin{comment}
%\begin{enumerate}[label=(\roman*),leftmargin=*,itemsep=1pt,topsep=1pt]
    %\item \textbf{Extraction of Claimed Objects}: identifying all visual entities or attributes referenced in each reasoning step;
    %\item \textbf{Preference Polling and Grounding}: retrieving candidate evidence from the input (e.g., detected regions, captions, or image–text alignments) and associating them with the corresponding claims;
    %\item \textbf{Faithfulness Scoring}: quantifying how faithfully each reasoning step and the entire reasoning chain aligns with perceptual evidence in the input.
%\end{enumerate}
%\end{comment}

\subsection{Extraction of Claimed Objects}
\label{subsec:objectext}
We begin by defining the setting for perceptual faithfulness verification.
Given a multimodal input consisting of an image $I$ and and a corresponding textual query $Q$, an MLLM produces a reasoning chain by prompting ``think step by step''~\cite{wei2022chain}: $R_{raw} = \{ s_1, s_2, \dots, s_T \},$
%\begin{equation}
%\end{equation}
where each \( s_t \) represents an intermediate reasoning step generated by the model.  
To quantify perceptual faithfulness, we must identify all \emph{claimed objects} across both the question and the reasoning steps.

We therefore employ a structured extraction process to isolate visually grounded claims from textual noise.  
Specifically, we process the concatenated text of \( Q \) and each reasoning step \( s_t \) using a helper LLM to extract meaningful object mentions.  
Each reasoning step is independently analyzed, excluding introductory or concluding phrases (e.g., ``Let's begin reasoning'' or ``Therefore, the answer is...'') to focus solely on evidence-bearing content.

Concretely, we query \code{Qwen2.5-7B-Instruct} (Qwen) ~\cite{yang2407qwen2} with a structured prompt (see Appx.~\ref{appendix:object_ext_prompt}) to extract, for each step \( s_t \), a set $O_t$ of $m_t$ claimed objects, where $O_t = \{ O_t^1, O_t^2, \dots, O_t^{m_t} \}.$
%\begin{equation}
   % O_t = \{ O_t^1, O_t^2, \dots, O_t^{m} \}.
%\end{equation}
The union of all extracted objects across the reasoning chain is denoted as
\begin{equation}
    O = \bigcup_{t=1}^{T} O_t.
\end{equation}

Each \( O_t^i \) represents a semantically meaningful object or concept that may correspond to perceivable entities in the input image.  
This step ensures that the subsequent grounding and polling stages operate only on visually relevant content, filtering out abstract or non-visual reasoning tokens.  
The extracted object sets \( \{O_t\}_{t=1}^T \) thus serve as the foundation for our evidence verification pipeline, linking textual reasoning to visual perception.

\subsection{Preference Polling and Grounding}
Once the claimed objects are extracted, the next step is to verify their perceptual validity in the input image.  
This verification involves evaluating both the \emph{existence} and \emph{localization} of each claimed object through a two-stage evidence assessment pipeline: 
(i) \textbf{Preference Polling}, which estimates the likelihood that a claimed object is perceptually verifiable in the image, and  
(ii) \textbf{Grounding}, which localizes verified objects to specific visual regions for spatial confirmation.  
 
\subsubsection{Preference Polling}
\label{subsec:poll}

While object detectors can localize objects spatially, their confidence scores are often unreliable when visual cues are weak, such as in cases of occlusion, clutter, or low salience.  
To address this limitation, we introduce a lightweight \emph{preference polling} module that performs an initial verification of object existence prior to grounding. 
This module serves as an evidence gate, polling the visual scene to estimate whether an object is perceptually verifiable.  
%By filtering out unsupported object claims early, the polling stage ensures that only objects with sufficient perceptual evidence proceed to subsequent grounding and faithfulness scoring.
%\paragraph{Method.}

We employ a frozen \code{CLIP-ViT-Large} (CLIP) model~\cite{radford2021learning} as the multimodal encoder pair $(f_{\text{img}}, f_{\text{txt}})$ and attach a lightweight polling head, a two-layer MLP with GELU activation, to predict whether a given object exists in the image. The polling model is trained on the POPE dataset~\cite{li2023evaluating}, which provides binary image–object existence labels (see Appx.~\ref{appendix:clip_training}).

Given an image \( I \) and an object \( X \), CLIP encoders produce visual and textual embeddings:
\begin{equation}
    \mathbf{v} = f_{\text{img}}(I) \in \mathbb{R}^{d}, 
    \quad 
    \mathbf{t} = f_{\text{text}}(X) \in \mathbb{R}^{d}.
\end{equation}
The element-wise product \( \mathbf{v} \odot \mathbf{t} \) captures cross-modal interaction, which is passed through the polling head to yield a confidence score:
\begin{equation}
    c_{p} = \sigma \left( \mathbf{W}_2 \, \mathrm{GELU}\left(\mathbf{W}_1 (\mathbf{v} \odot \mathbf{t})\right) \right),
\end{equation}
where \( \mathbf{W}_1 \in \mathbb{R}^{512 \times d} \), \( \mathbf{W}_2 \in \mathbb{R}^{2 \times 512} \), and \( \sigma \) denotes the sigmoid function.  
The resulting confidence \( c_{p} \in [0,1] \) is interpreted as the probability that object \( X \) exists in the image.

For each reasoning step \( s_t \), polling produces a set of existence confidence scores:
\begin{equation}
    C_{p,t} = \{ c_{p,t}^1, c_{p,t}^2, \dots, c_{p,t}^{m_t} \},
\end{equation}
where \( c_{p,t}^i \) corresponds to the predicted existence probability for each claimed object \( O_t^i \).  
%Objects whose confidence falls below a threshold \( \tau_p \) are considered \emph{unverified} and excluded from subsequent grounding.  
This early verification step thus acts as a perceptual verifier, aligning with our evidence-first philosophy.

\subsubsection{Grounding}
\label{subsec:grounding}

After preference polling confirms the likely existence of an object, we proceed to localize it within the image using a grounded detection model. This step provides region-level visual evidence and confidence scores for objects.%the subset of objects verified by polling.

%\paragraph{Method.}
%For each verified object \( O_t^i \) (where \( c_{p,t}^i > \tau_p \)), 
For each object \( O_t^i \) we employ a frozen \code{GroundingDINO-base} model~\cite{liu2023grounding} to detect and ground the object in the input image \( I \).  
\code{GroundingDINO} returns a set of bounding boxes and associated confidence scores:
\begin{align} 
    & B_t^i = \{ b_t^{i,1}, b_t^{i,2}, \dots, b_t^{i,k_i} \}, \nonumber \\ 
    \quad 
    & C_t^i = \{ c_t^{i,1}, c_t^{i,2}, \dots, c_t^{i,k_i} \}.
\end{align}
Each \( b_t^{i,j} \) represents a candidate region corresponding to object \( O_t^i \), and \( c_t^{i,j} \in [0,1] \) denotes the model’s confidence that the region indeed contains the object.  
We retain the most confident detection:
\begin{equation}
    c_{g,t}^i = \max_j c_t^{i,j}.
\end{equation}
The resulting set of grounded confidence scores for step \( s_t \) is:
\begin{equation}
    C_{g,t} = \{ c_{g,t}^1, c_{g,t}^2, \dots, c_{g,t}^{m_t} \}.
\end{equation}

\paragraph{Remark.}
While preference polling provides global existence verification, grounding supplies fine-grained spatial evidence. Combining these complementary signals yields a more reliable perceptual faithfulness measurement.

\subsection{Faithfulness Scoring}
\label{subsec:judge}
After obtaining the \emph{polling confidence} set \( C_{p,t} \) and \emph{grounding confidence} set \( C_{g,t} \) for each reasoning step \( s_t \), we combine them to compute the perceptual faithfulness of the reasoning process.

\paragraph{Object-Level Confidence.}
For each claimed object \( O_t^i \) within step \( s_t \), we compute an overall confidence score \( c_t^i \) by fusing the existence confidence from preference polling and the spatial confidence from grounding:
\begin{align}
    & c_t^i = \alpha \, c_{p,t}^i + (1-\alpha) \, c_{g,t}^i,
    \quad \nonumber \\
    & c_{p,t}^i \in C_{p,t}, \;
    c_{g,t}^i \in C_{g,t},
\end{align}
where \( \alpha \in [0,1] \) controls the relative importance of existence versus localization confidence.  
Since the polling confidence directly reflects perceptual existence, we empirically set \( \alpha = 0.7 \).

To interpret the confidence \( c_t^i \) as a discrete faithfulness score, we define a three-level mapping function \( f_t^i \) as:
\[ 
f_t^i =
\begin{cases}
    0, & c_t^i < 0.4, \quad \text{(confidently absent)} \\
    c_t^i, & 0.4 \leq c_t^i \leq 0.6, \text{(uncertain existence)} \\
    1, & c_t^i > 0.6, \quad \text{(confidently present)}.
\end{cases}
\]

\paragraph{Step-Level and Chain-Level Faithfulness.}
We then aggregate the object-level confidence scores to obtain a \emph{step-level} faithfulness score for each reasoning step \( s_t \):
\begin{equation}
    F_{\text{step},t} = \frac{1}{m_t} \sum_{i=1}^{m_t} f_t^i,
\end{equation}
which quantifies how faithfully the specific step's visual claims align with the image.  
Finally, we compute the \emph{chain-level} perceptual faithfulness of the entire reasoning process as the mean of the verified step-level scores:
\begin{equation}
    F_{\text{chain}} = \frac{1}{n} \sum_{t=1}^{n} F_{\text{step},t},
\end{equation}
where \( n \) denotes the total number of reasoning steps (excluding introductory or concluding phrases).  

A higher \( F_{\text{chain}} \) indicates that the reasoning chain is more consistently grounded in visual evidence, while lower scores suggest the presence of unsupported object references.  
This quantitative formulation allows us to evaluate perceptual faithfulness both locally (per step) and globally (across the reasoning trace), serving as a \emph{general} perceptual faithfulness evaluation pipeline.

\section{\textsc{FaithAct}: Faithful-First Planning and Acting}
Having established how to quantify perceptual faithfulness, we now turn to the problem of 
\emph{how to integrate faithfulness into the reasoning process itself}. We propose \textsc{FaithAct}, a \emph{\textbf{Faith}fulness-First Planning and \textbf{Act}ing} framework that enforces evidential verification during reasoning generation. Unlike conventional \emph{generate-then-verify} paradigm, \textsc{FaithAct} follows a \emph{verify-as-you-generate} principle, where each step is explicitly checked for perceptual support before being admitted into the reasoning chain. 

% The planner combines claimed object extraction, function calling in Section~\ref{subsec:calling} and refine-based reasoning in Section~\ref{subsec:refine}.

\subsection{Planning Objective}
We explicitly formulate reasoning process as a faithfulness-constrained planning problem: Given a query and image, the planner seeks a sequence of faithful reasoning steps:
\begin{equation} 
    S^{*} = \arg \text{max} F_{step}(s_t) \quad \text{s.t.} \forall t \; F_{step}(s_t) \geq c,
\end{equation}
where $s_{t}$ is the $t$-th reasoning step, $F_{step}(s_t)$ is its faithfulness score from Sec~\ref{subsec:judge}, and $c$ is a minimum evidential confidence threshold.
This converts reasoning into a faithfulness-regularized plan process: only steps sufficiently supported by evidence are eligible to advance the reasoning chain. When a proposed step \( s_t \) fails to meet the threshold (\( F_{\text{step}}(s_t) < c \)), the planner either refines or regenerates the step before proceeding, thereby enforcing the \emph{faithfulness desideratum}.

\begin{comment}
\subsection{OoI Extraction}
\label{subsec:ooi}
At the beginning of the reasoning process, the planner first queries the MLLM itself to generate a set of objects that it considers relevant and potentially helpful for the given task based on both image and text modality. These objects are defined as \textit{objects of interest} (OoI). Notably, we allow possible hallucinations at this stage. It means that the extracted objects may not necessarily exist in the input image. The rationale is that we aim to capture the model's own attention or semantic focus before imposing any faithfulness constraint.

However, through preliminary experiments, we observe that the MLLMs sometimes overly rely on visual bias, while sometimes ignoring important objects explicitly mentioned in the textual query. To mitigate this issue, we introduce a complementary extraction step based on the textual modality. Specifically, the MLLM is prompted again using the question text only, and generates another object set that reflects the textual semantics of the task.

Finally, the union of these two object sets is taken as the final set of OoI. This ensures that both visually and semantically important entities are covered. Prompts and some preliminary experiment results are in Appendix~\ref{appendix:ooi_prompts}.
\end{comment}

\subsection{Planning-and-Acting Loop}
\label{subsec:calling}
\textsc{FaithAct} executes a \emph{faithfulness-first planning loop} at each iteration. A lightweight instruction-tuned MLLM serves as the reasoning controller. This planner composes executable steps such as \emph{SELECT}, \emph{ABSTAIN}, or \emph{COUNT}, based on both textual inputs and verified visual states. Each proposed step is immediately validated by perceptual faithfulness metrics before execution, ensuring that the reasoning process evolves only evidence-supported actions. To facilitate this, \textsc{FaithAct} provides an extensible interface composed of callable functions that serve as structured APIs. These functions supply multimodal evidence signals and can be invoked by the MLLM during planning, enabling the model to retrieve, verify, and reason over perceptual information in a unified and controlled manner.

\begin{comment}
At each iteration, FaithAct executes a \emph{faithfulness-first planning loop}.%consisting of three sequential phases:  
%(1) \textbf{Verify}, (2) \textbf{Ground}, and (3) \textbf{Plan}.  
The planner, a lightweight instruction-tuned MLLM, serves as a reasoning controller that composes executable steps such as \texttt{SELECT}, \texttt{ABSTAIN}, or \texttt{COUNT}, conditioned on both textual inputs and the verified visual states.  
Each proposed step is immediately validated by the perceptual faithfulness metrics before execution, ensuring that the reasoning process evolves only through evidence-supported actions.

% To design an attachable general interface for the planner with the verifier, FaithAct exposes a set of callable functions that provide multimodal evidence signals to guide reasoning.  
% These functions operate as structured APIs that the MLLM can invoke during planning:

FaithAct provides a general and extensible interface composed of callable functions that supply multimodal evidence signals to guide the reasoning process.  
These functions act as structured APIs that the MLLM can invoke during planning, allowing the model to retrieve, verify, and reason over perceptual information in a unified manner:
\end{comment}
%\begin{itemize}[leftmargin=1.3em]
\texttt{Poll()}: Returns the probability of objects' existence based on the polling model (Sec.~\ref{subsec:poll}).

\texttt{Ground()}: Returns the bounding boxes and confidence scores of a claimed object detected by \code{GroundingDINO} (Sec.~\ref{subsec:grounding}).

\texttt{Select()}: Selects an object as \emph{existent} if its confidence score exceeds the threshold in Sec.~\ref{subsec:judge}.

\texttt{Abstain()}: Abstains from selecting an object if its overall confidence is below the threshold.

\texttt{Count()}: Counts the number of \emph{reliably grounded} bounding boxes returned by \texttt{Ground()}, yielding the object count for quantitative reasoning.
%\end{itemize}

Among these, \texttt{COUNT()} is a functional reasoning operation, while the remaining calls are verification functions that enforce perceptual faithfulness constraints.  
This modular design allows new functions, such as \texttt{Attribute()} or \texttt{Relate()}, to be incorporated, and supports further optimization or refinement strategies following \texttt{Abstain()}.

\begin{table*}[tph!]
\centering
\small
\resizebox{\linewidth}{!}{
\begin{tabular}{r|cccc|c}
\toprule
\textbf{Baseline Methods \& Datasets} & \textbf{\dataset{LLaVA-bench} (\%)} & \textbf{\dataset{RealWorldQA} (\%)} & \textbf{\dataset{POPE} (\%)} & \textbf{\dataset{MMHal} (\%)} & \textbf{Average (\%)} \\
\midrule
\rowcolor{promptgray}
Qwen + CoT & 46.05$\pm$19.58 & 48.11$\pm$27.04 & 45.21$\pm$24.87 & 53.34$\pm$24.02 & 48.18 \\
\rowcolor{promptgray}
\quad + VAT & 51.59$\pm$21.37 & 50.13$\pm$26.43 & 21.46$\pm$19.20 & 55.32$\pm$28.58 & 44.62 \\
\rowcolor{promptgray}
InternVL + CoT & 45.63$\pm$16.60 & 44.23$\pm$25.43 & 43.25$\pm$23.27 & 53.17$\pm$23.64 & 46.57 \\
\rowcolor{promptgray}
\quad + \textsc{VAT} & 48.97$\pm$17.22 & 45.31$\pm$28.19 & 40.26$\pm$22.47 & \underline{54.51$\pm$26.99} & 47.26 \\
\rowcolor{promptgray}
LLaVA + CoT & 47.56$\pm$23.35 & 52.31$\pm$28.44 & 52.28$\pm$25.66 & 30.63$\pm$28.56 & 45.70 \\
\rowcolor{promptgray}
\quad + VAT & 46.16$\pm$19.46 & 50.15$\pm$30.66 & \underline{52.59$\pm$27.15} & 30.30$\pm$28.96 & 44.80 \\
\midrule

\rowcolor{toolblue}
Qwen + Grounded-CoT & 50.04$\pm$17.54 & 53.35$\pm$26.68 & \underline{53.49$\pm$22.47} & \underline{56.77$\pm$25.86} & \underline{53.41} \\
\rowcolor{toolblue}
\quad + ReAct & \underline{54.82$\pm$26.53} & \underline{56.82$\pm$31.71} & 45.02$\pm$25.04 & 33.76$\pm$28.43 & 47.61 \\
\rowcolor{toolblue}
InternVL + Grounded-CoT & 48.35$\pm$18.05 & 47.94$\pm$19.36 & 17.44$\pm$19.01 & 18.10$\pm$14.96 & 32.96 \\
\rowcolor{toolblue}
\quad + ReAct & \underline{51.97$\pm$24.08} & \underline{56.56$\pm$31.30} & \underline{52.32$\pm$24.57} & 31.61$\pm$29.61 & \underline{48.11} \\
\rowcolor{toolblue}
LLaVA + Grounded-CoT & 50.62$\pm$18.74 & 52.30$\pm$28.89 & 50.56$\pm$25.86 & 31.69$\pm$27.00 & 46.29 \\
\rowcolor{toolblue}
\quad + ReAct & \textbf{59.20$\pm$27.18} & \underline{56.82$\pm$31.71} & 46.09$\pm$34.04 & \underline{32.23$\pm$31.73} & \underline{48.59} \\

\midrule

\rowcolor{faithgreen}
Qwen + FaithAct & \textbf{55.10$\pm$20.14} & \textbf{57.22$\pm$27.85} & \textbf{56.87$\pm$24.29} & \textbf{66.45$\pm$27.87} & \textbf{58.91} \\
\rowcolor{faithgreen}
InternVL + FaithAct & \textbf{52.64$\pm$17.75} & \textbf{57.35$\pm$29.40} & \textbf{56.01$\pm$21.76} & \textbf{61.71$\pm$27.01} & \textbf{56.93} \\
\rowcolor{faithgreen}
LLaVA + FaithAct & \underline{52.82$\pm$22.77} & \textbf{58.11$\pm$30.37} & \textbf{56.09$\pm$27.71} & \textbf{39.91$\pm$27.92} & \textbf{51.73} \\

\bottomrule
\end{tabular}
}
\caption{\textbf{Faithfulness evaluation across reasoning paradigms.}
We report the mean and standard deviation of the chain-level faithfulness score
$F_{\text{chain}}$ (in \%) on four benchmarks. Methods are organized by \emph{reasoning paradigm}: prompt-based reasoning (gray shading), tool-augmented reasoning (blue shading), and faithfulness-first planning (green shading), while holding backbone models fixed to enable controlled comparison. Best and second-best results within each backbone model are highlighted in \textbf{bold} and \underline{underline}, respectively.}
\label{tab:faith_eval_grouped}
% \vskip -0.10in
\end{table*}

\subsection{Action-Guided Reasoning Refinement}
\label{subsec:refine}

The overall reasoning process of \textsc{FaithAct} is summarized in Algorithm~\ref{algo:fplan} in Appx.~\ref{appendix:algo}, which follows a \emph{refine-based} procedure. After verification, any reasoning step that fails to meet the perceptual faithfulness threshold is not discarded outright but revised through an \emph{action-guided refinement} process.  
In this phase, the planner re-invokes the MLLM with updated evidence from the evaluator, such as object existence labels, bounding boxes, and counts, to regenerate or adjust the unfaithful steps.  
The refinement prompt %explicitly conditions on verified visual information, guiding 
guides the model to produce a reasoning trace that remains consistent with perceptual evidence while maintaining logical continuity.  
This action-guided correction mechanism ensures that \textsc{FaithAct} preserves both interpretability and grounding, progressively improving the reasoning chain through verified feedback.
Further implementation and prompt details are provided in Appx.~\ref{appendix:f_plan_prompts}.

\section{Experiments}
We conduct an extensive experimental study demonstrating the effectiveness of Faithful-First RPA framework in measuring and improving faithfulness in MLLM reasoning. Unless otherwise noted, all results are reported as percentages (\%).

\begin{figure*}[t]
    \centering
    \includegraphics[width=0.93\linewidth]{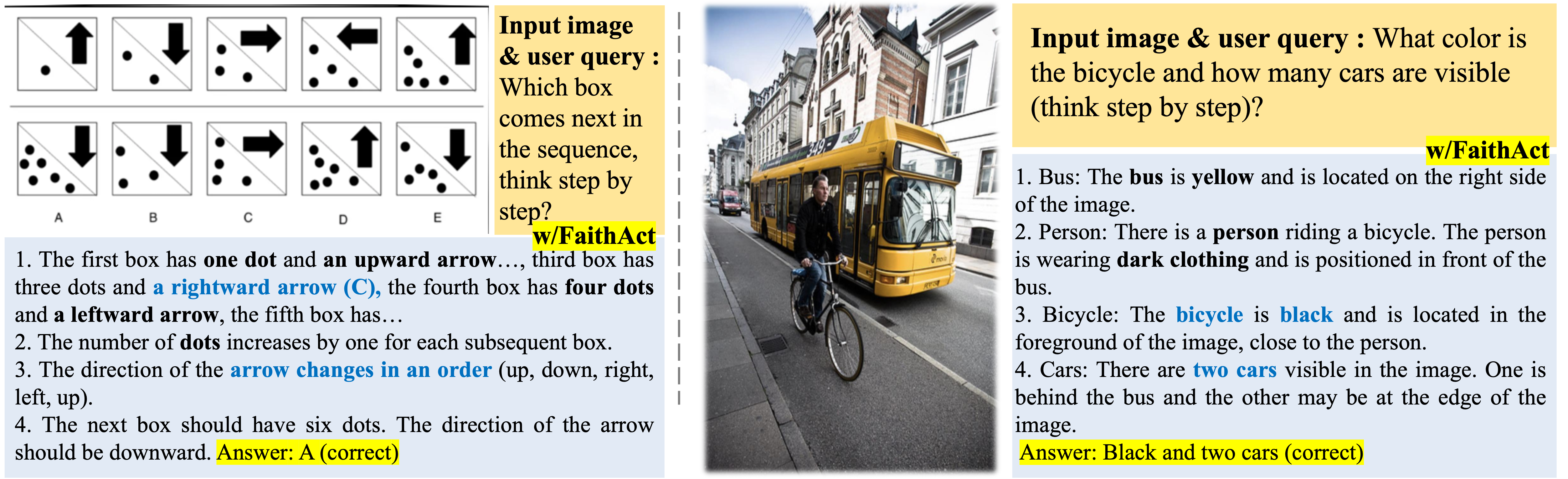}
    \caption{Qualitative comparison of reasoning chains generated with and without \textsc{FaithAct} on two illustrative cases. In both tasks, \textsc{FaithAct} enforces step-level perceptual verification, correcting hallucinated descriptions (red colored in Fig.~\ref{fig:motivation}) and producing more structured, visually grounded reasoning (colored in blue).}
    \label{fig:case_study}
    % \vskip -0.10in
\end{figure*}

\begin{figure}[htbp]
    \centering
    \includegraphics[width=\linewidth]{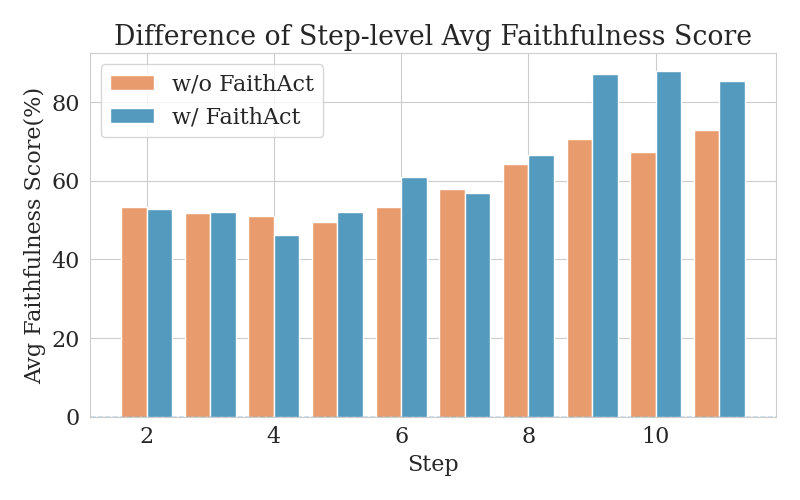}
    \caption{Distribution of average $F_{\text{step}}$ difference across reasoning steps. The x-axis are reasoning steps and y-axis represents the $F_{\text{step}}$ averaged difference between \model{Qwen} with and without \textsc{FaithAct}.}
    \label{fig:distribution}
    \vskip -0.05in
\end{figure}

\subsection{Experimental Setup}

\noindent \textbf{Datasets.} 
% We select real-world multimodal benchmarks like \dataset{LLaVA-bench}~\cite{Liu2023VisualIT}, \dataset{RealWorldQA}~\cite{xai_grok1_5}, \dataset{POPE}~\cite{li2023evaluating} and \dataset{MMHal-Bench}~\cite{2023llavarlhf} (MMHal). Images in these benchmarks contain lots of real-world objects, suitable for faithfulness evaluation on them. For data in POPE, specially, we remove the queried object to ensure fair score judgment.
We evaluate our framework on widely used multimodal benchmarks covering object recognition, visual grounding, and hallucination-sensitive question answering, including \dataset{LLaVA-bench}~\cite{liu2023visual}, \dataset{RealWorldQA}~\cite{xai_grok1_5}, \dataset{POPE}~\cite{li2023evaluating}, and \dataset{MMHal-Bench}~\cite{2023llavarlhf} (MMHal).
These datasets feature images with rich real-world objects and are well suited for evaluating perceptual faithfulness. Specifically, the queried objects in POPE are removed to ensure fair assessment.

\noindent \textbf{Baselines.} 
We compare our framework with representative training-free reasoning frameworks, including \textsc{CoT}~\cite{wei2022chain, zhang2023multimodal}, \textsc{ReAct}~\cite{yao2022react}, \textsc{Grounded-CoT}~\cite{Wu2025GroundedCF}, and \textsc{Visual Abstract Thinking} (VAT)~\cite{liu2025visual}. 
These methods are evaluated on widely used MLLMs, namely \code{Qwen-2.5-VL-7B}(Qwen)~\cite{wang2024qwen2}, \code{InternVL3-8B}(InternVL)~\cite{zhu2025internvl3}, and \code{LLaVA-one-Vision-1.5-8B}(LLaVA)~\cite{an2025llava}.
% We choose several widely-used MLLMs for our task. \code{Qwen}~\cite{Bai2025Qwen25VLTR}, \code{InternVL3-8B} (InternVL)~\cite{Zhu2025InternVL3EA} and \code{LLaVA-one-Vision-1.5-8B} (LLaVA)~\cite{an2025llavaonevision15fullyopenframework} are chosen. We compare our method with several reasoning methods that do not training the MLLM itself. \textsc{CoT}~\cite{wei2022chain, zhang2023multimodal} is utilized as a baseline for simply generating reasoning chains, We also apply \textsc{ReAct}~\cite{yao2022react}, \textsc{Grounded-CoT}~\cite{Wu2025GroundedCF}, and \textsc{Visual Abstract Thinking} (VAT)~\cite{liu2025visual} mentioned in Sec.~\ref{sec:related_frameworks}. %for reasoning with the help of extra object information. ReAct enables MLLMs to automatically call useful functions during reasoning (we provide the same functions for it as our FaithAct's). While Grounded-Cot requires MLLMs to ground the objects in reasoning by itself during reasoning, and VAT provide MLLMs abstract images (sketch, etc.) together with the original image to benefit from multi-level information.

\noindent \textbf{Hyper-parameters setup.} The box threshold of \code{GroundingDINO} is 0.35 and the text threshold is 0.25. We keep the default generation settings for helper LLM and MLLMs.

\subsection{Experimental Results and Analysis}
Table~\ref{tab:faith_eval_grouped} shows the performance of \textsc{FaithAct} compared with selected baseline methods. The experimental results highlight several key findings. 

% \paragraph{Perceptual faithfulness is broadly underestimated.}
\paragraph{\textbf{Perceptual faithfulness is broadly underestimated.}}
Across all evaluated models, faithfulness remains far from ideal. For instance, MLLMs such as \code{Qwen} achieve faithfulness scores of only around $\sim$50 across datasets and evaluation metrics, substantially below the desired level (near 100). This highlights that current reasoning models often generate partially ungrounded or inconsistent reasoning traces, indicating that perceptual verification and evidential grounding are still open challenges.

\paragraph{\textsc{FaithAct} improves faithfulness across models.}
% \noindent \textbf{\textsc{FaithAct} improves faithfulness across models.}
Introducing the \textsc{FaithAct} framework consistently enhances reasoning faithfulness across three tested models. For example, \code{InternVL} achieves 57.35~$\pm$~29.40 on \dataset{RealWorldQA} with \textsc{FaithAct}, compared to 44.23~$\pm$~25.43 without it. Generally, our method attains the \textbf{highest} faithfulness in \textbf{11 out of 12} evaluated settings, demonstrating its effectiveness across different architectures and datasets. Averaged across models, \textsc{FaithAct} achieves a mean score of 55.86\%, outperforming the strongest baseline reasoning paradigm (ReAct, 48.10\%) by 7.76 percentage points and \textsc{CoT} by 9.04\%.
We show that $F_\text{chain}$ of \textsc{ReAct} is theoretically bounded by that of \textsc{FaithAct} (in Appx.~\ref{appendix:theory}). We also note that the reported standard deviations are relatively large, which is expected given the instance-level nature of perceptual faithfulness.

\paragraph{\textsc{FaithAct} mitigates hallucination.} Although \textsc{FaithAct} is not explicitly designed to mitigate hallucinations, it yields substantial gains on hallucination-focused benchmarks.
In particular, on \textsc{MMHal}, 
models with \textsc{FaithAct} exhibit a marked reduction in hallucinated reasoning steps, achieving an average improvement of 21.99\% and 9.81\% over tool-augmented prompt-based reasoning frameworks, respectively.
For example, \code{Qwen} with \textsc{FaithAct} achieves 66.45~$\pm$~27.87, surpassing its second-best score. This notable gain indicates that the principle enforced by \textsc{FaithAct} effectively constrains perceptual unfaithfulness during intermediate reasoning steps, thereby enhancing faithfulness of model outputs.

%\vskip -0.20in

%\subsection{Faithfulness Without Performance Trade-off}
% \subsubsection{Task Performance Preservation.}
\paragraph{Improving faithfulness does not degrade task performance.}
Table~\ref{tab:correctness} reports the performance of three models with and without \textsc{FaithAct} across the two benchmarks. The results show that integrating \textsc{FaithAct} preserves the model's ability to generate correct final answers to multimodal questions, with slight improvements observed in two out of three datasets. These findings indicate that \textsc{FaithAct} enhances the faithfulness of reasoning steps without harming the model's original performance. Moreover, although behavioral faithfulness is not explicitly measured, the empirical evidence supports our hypothesis that perceptually grounded reasoning encourages behavioral consistency.

%\subsection{Further Study}

%\subsection{Faithfulness Score Distribution}
\paragraph{Faithfulness increases across reasoning steps.} Fig.~\ref{fig:distribution} reports the distribution of $F_{step}$, when comparing \textsc{FaithAct} and raw \textsc{CoT} without \textsc{FaithAct}. We observe that the benefit of \textsc{FaithAct} becomes particularly pronounced in the later reasoning steps, suggesting that its intervention is most effective when the model engages in deeper chains of reasoning. This observation is consistent with prior work~\cite{wu2025more} which points out that excessively \textsc{CoT} increases susceptibility to noise in their later steps and thereby leads to more unfaithfulness.

\subsubsection{Qualitative Analysis and Case Studies.} We conduct a qualitative comparison to illustrate how \textsc{FaithAct} improves reasoning faithfulness in the motivating cases shown in Fig.\ref{fig:motivation}. In the first example (Fig.~\ref{fig:case_study} top), the baseline model hallucinates \textit{a yellow bicycle} and \textit{no cars} by relying on language priors or implication (e.g., yellow bus) rather than image evidence. With \textsc{FaithAct}, each reasoning step is perceptually verified, leading to the correct identification of a \textit{black bicycle} and \textit{two cars}. In the second example (bottom), both models correctly predict the next box in a visual sequence, but with \textsc{FaithAct} guidance, reasoning chain is more structured and explicitly justifies each visual attribute transition. For example, the direction of upward arrow in (C) is corrected to rightward. These cases demonstrate that \textsc{FaithAct} enhances perceptual grounding and visually consistent reasoning processes, empirically supporting and improving behavioral consistency.

%\subsection{Human Validation on Extracted Objects}

\subsection{Human Validation on Extracted Objects}
To evaluate whether \code{Qwen} extracted objects (Sec.~\ref{subsec:objectext}) accurately capture what is indeed stated in the text, we conducted a human validation study using 50 snippet and 25 annotators, producing a total of $7,550$ object-level labels, detailed in Appx. (~\ref{app:human_validation}). 
Compared with human judgments, the LLM shows near-perfect consistency, achieving a precision of \textbf{99.42\%} with only $44$ false positives (over-extraction rate = 0.58\%).
At snippet level, we compute \emph{snippet validity}, defined as the probability that all extracted objects for a snippet are judged correct by a human annotator. Among $1,250$ annotator--snippet pairs, the LLM attains a mean snippet validity of \textbf{0.9680}, showing extracted object sets are almost fully aligned with human.

\begin{table}[t]
    \centering
    \resizebox{\linewidth}{!}{
    \begin{tabular}{l|cc}
    \toprule
       \textbf{Datasets \& Models}   & \textbf{\dataset{RealWorldQA} (\%)} & \textbf{\dataset{MMHal} (rating)} \\
       \midrule
        Qwen + CoT & 70.1 & 3.40 \\
        
        +\textbf{FaithAct (ours)} &  74.5 & 3.48 \\
        \midrule
       InternVL + CoT & 70.8 & 3.61 \\
        
        +\textbf{FaithAct (ours)} & 71.2 & 3.58 \\
        \midrule
       LLaVA + CoT & 68.1 & 3.41 \\
        
        +\textbf{FaithAct (ours)} & 67.8 & 3.46 \\
        \bottomrule
    \end{tabular}
    }
    \caption{\label{tab:correctness}
    Task performance with \textsc{CoT} and  \textsc{FaithAct}.}
    \vskip -0.05in
\end{table}

\subsection{Ablation and Sensitivity Analysis}
We conduct ablation and sensitivity analyses to examine the contribution of individual components in the proposed framework. In particular, we focus on the two core functions \texttt{Poll()} and \texttt{Ground()}. Throughout these experiments, the faithfulness evaluation protocol remains unchanged and we disable one module at a time to isolate its effect.

Results on \code{Qwen} are reported in Table~\ref{tab:ablation}. Removing either \texttt{Poll()} or \texttt{Ground()} leads to a noticeable decrease in faithfulness (approximately 5\%), with \texttt{Ground()} having a slightly larger impact. This suggests that object localization provides essential visual evidence for perceptually grounded reasoning. The results indicate that the two modules are complementary, and that \textsc{FaithAct} achieves the greatest improvement in faithfulness when both are jointly enabled.

We additionally conduct a study replacing \code{GroundingDINO} with a recently published alternative, \code{SAM3}~\cite{carion2025sam}.
Equiped with \code{SAM3}, the performance of \textsc{FaithAct} drops obviously (5\% on \dataset{RealWorldQA} and more on \dataset{MMHal}). This may suggest that \textsc{FaithAct} needs localization-specific models (like \code{GroundingDINO}) to provide grounding results more accurate with proper confidence.

we also choose Qwen on RealWorldQA for an additional ablation analysis over various threshold values. Results are reported in Table~\ref{tab:sensitive_param}. Notably, the changes in hyperparameters do not influence the final performance of FaithAct too much. The threshold 0.5-0.5 will lead to a slight decline in performance.

\begin{figure}[t]
    \centering
    \includegraphics[width=.95\linewidth]{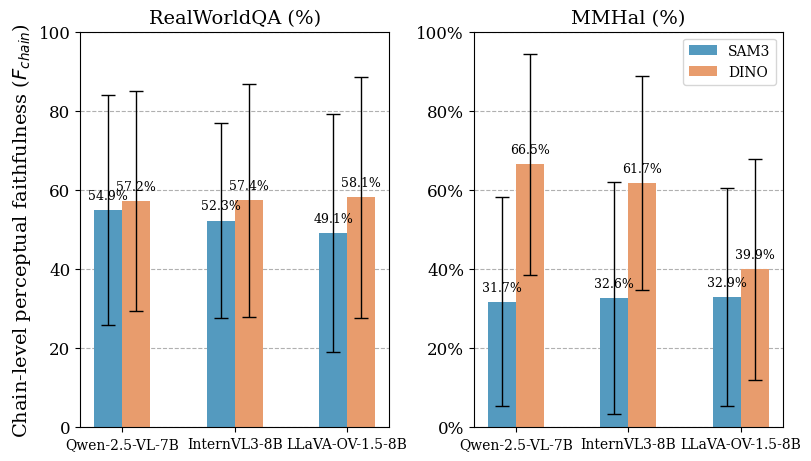}
    \caption{Comparative performance of the \textsc{FaithAct} framework using \code{SAM3} vs. \code{GroundingDINO} as the \texttt{Ground()} function. Results show mean accuracy ($\%$) and standard deviation (error bars) across three MLLMs on the \dataset{RealWorldQA} and \dataset{MMHal} datasets.}
    \label{fig:sam3}
    \vskip -0.05in
\end{figure}

\begin{table}[t]
    \centering
    \resizebox{\linewidth}{!}{
    \begin{tabular}{l|ll}
    \toprule
       \textbf{Datasets \& Models}   & \textbf{\dataset{RealWorldQA} (\%)} & \textbf{\dataset{MMHal} (\%)} \\
       \midrule
        FaithAct & \textbf{57.22$\pm$27.85} & \textbf{66.45$\pm$27.87} \\
        
        FaithAct (w/o Poll) & 54.24$\pm$28.13 & 63.25$\pm$26.75 \\

        FaithAct (w/o Ground) & 53.16$\pm$29.12  & 62.47$\pm$28.83 \\
        
        \bottomrule
    \end{tabular}
    }
    \caption{Ablation study of the two core components.}
    \label{tab:ablation}
    \vskip -0.10in
\end{table}

\begin{table}[t]
\centering
\resizebox{\linewidth}{!}{
\begin{tabular}{l|ccc}
\hline
\textbf{$\alpha$ \& threshold} & 0.3-0.7 & 0.4-0.6 & 0.5-0.5 \\
\hline
0.5 & 55.76$\pm$27.55 &	54.15$\pm$26.09 &	52.52$\pm$26.63 \\
0.6 &   55.84$\pm$27.59	& 55.96$\pm$23.73	& 52.58$\pm$24.62 \\
0.7 & 57.44$\pm$26.24 & 57.22$\pm$27.85	& 52.72$\pm$21.47
 \\
\hline
\end{tabular}
}
\caption{Sensitivity analysis on $\alpha$ and threshold.}
\label{tab:sensitive_param}
\end{table}

\subsection{Computational Overhead}
\label{sec:computational}
We report the average time per chain generation (in seconds) using Qwen-2.5-VL-7B under identical decoding settings in Table~\ref{tab:time_comparison}. Importantly, our method does not modify the underlying generation function or decoding procedure of the MLLM. Consequently, the token generation rate (tokens/second) remains effectively the same as that of raw CoT. We also report reasoning-chain lengths in Appendix~\ref{appendix:chain_length}.

\subsection{Perception-light Benchmarks}
\label{sec:perception_light}
we have additionally evaluated FaithAct on MathVista~\cite{lumathvista} and ScienceQA~\cite{lu2022learn}, where visual evidence is necessary but not sufficient and abstract or external knowledge is often required. Table~\ref{tab:perception_light} reports the  $F_{\text{chain}}$ of FaithAct and raw CoT (\%) on these datasets. FaithAct performs well on ScienceQA and slightly better than CoT on MathVista.

\section{Discussion: Behavioral Faithfulness}
\label{sec:discussion}
To provide more analysis and insights between behavioral faithfulness and perceptual faithfulness, we added a qualitative taxonomy to showing that perceptual faithfulness (PF) and behavioral faithfulness (BF) can dissociate across all four regimes (BF+ PF+; BF+ PF-; BF- PF+; BF- PF-, '+' and '-' mean faithful and unfaithful). Importantly, PF+ does not guarantee BF+, where a model may correctly ground intermediate observations but still produce an incorrect final decision due to downstream reasoning errors. Conversely, BF+ does not imply PF+, where a model can arrive at the correct answer while relying on perception-unfaithful rationales. We will also clarify a standard operationalization of behavioral faithfulness from prior works in the main text. For example, if the model truly uses the cited evidence, then removing or perturbing that evidence should systematically alter the model's decision.

\section{Conclusion}
We introduced Faithful-first RPA, a framework that enforces perceptual grounding throughout the reasoning process. Within this framework, \textsc{FaithEvi} provides a principled and fine-grained evaluation of perceptual faithfulness, while \textsc{FaithAct} operationalizes faithfulness-first reasoning through planning and acting. 
Experiments across benchmarks show that our framework improves perceptual faithfulness by up to 24\% without compromising accuracy, effectively mitigating hallucinations. These results underscore the value of faithfulness as a core design principle. Future work will extend to behavioral faithfulness and more challenging open-ended reasoning settings.
\begin{table}[t]
\centering
\setlength{\tabcolsep}{2pt}
\resizebox{\linewidth}{!}{
\begin{tabular}{l|cccc}
\hline
\textbf{Datasets} & \textbf{\dataset{LLaVA-BENCH}} & \textbf{\dataset{REALWORLDQA}} & \textbf{\dataset{MMHAL}} & \textbf{\dataset{POPE}} \\
\hline
FaithAct (s) & 18.85 & 14.04 & 16.48 & 14.72 \\
CoT (s)      & 10.73 & 4.65  & 5.59  & 3.41  \\
$\delta$ Time & 8.12  & 9.39  & 10.89 & 11.31 \\
\hline
\end{tabular}
}
\caption{Inference time comparison.}
\label{tab:time_comparison}
\end{table}

\begin{table}[t]
\centering
\resizebox{\linewidth}{!}{
\begin{tabular}{l|cc}
\hline
\textbf{Dataset \& Methods} & \textbf{\dataset{MathVista}} & \textbf{\dataset{ScienceQA}} \\
\hline
FaithAct &	47.65$\pm$24.73	& 52.97$\pm$32.63 \\
CoT	& 47.00$\pm$23.91	& 40.14$\pm$32.88
 \\
\hline
\end{tabular}
}
\caption{Faithfulness evaluation on perception-light benchmarks.}
\label{tab:perception_light}
\vskip -0.15in
\end{table}

%In this work, we introduced FaithAct, a faithfulness-first planning framework that verifies perceptual grounding as reasoning unfolds. By integrating lightweight retrieval, grounding, and refinement within a verify-as-you-generate paradigm, FaithAct enforces evidence alignment at both step and chain levels. Our proposed FaithEvi pipeline further provides a principled evaluation of perceptual faithfulness, revealing that current MLLMs often produce fluent but ungrounded reasoning traces. Extensive experiments across multiple benchmarks demonstrate that FaithAct substantially improves perceptual faithfulness by up to 23\% without sacrificing task accuracy, and effectively mitigates hallucination. These findings highlight the importance of faithfulness as a design principle rather than a post-hoc objective. Future work will extend FaithAct to behavioral faithfulness and open-ended reasoning settings, advancing towards truly interpretable and trustworthy multimodal reasoning.

%\clearpage
\section*{Limitations}
This work primarily focuses on perceptual faithfulness and does not directly evaluate behavioral faithfulness, i.e., the alignment between reasoning traces and the model’s final decision process. While our empirical results suggest that enforcing perceptual grounding may be associated with more behaviorally consistent outputs, this relationship is not explicitly measured or guaranteed.

In addition, although we conduct a human validation study indicating that LLM-extracted objects are generally accurate, we do not perform large-scale human evaluations of step-level or chain-level perceptual faithfulness. Complementary human studies at different levels of granularity could help further contextualize and validate our findings.

Finally, our current implementation verifies perceptual faithfulness primarily at the level of object existence. Extending the Faithful-first RPA framework to incorporate attribute- and relation-level verification remains an important direction for future work, and may further reduce perceptual unfaithfulness in cases where objects are present but their properties or relations are mischaracterized.

\section*{Acknowledgements}
This work was supported in part by computing resources provided by Lambda AI.

\bibliography{custom}

\clearpage

\appendix
\section{Related Work}
\label{sec:related_frameworks}

\noindent \textbf{Reasoning Frameworks for MLLMs.}
MLLMs have developed surprising reasoning capabilities in multimodal problem solving, task planning, scientific discovery and so on~\cite{lumathvista, li2025chemvlm, gao2024cantor,an2024mc, an2025unictokens, lin2025perceive, li2025miv, li2025taco}. 
To enhance systematic reasoning, several frameworks~\cite{chen2024m3cot, sun2025mm, shao2024visual, zhou2025logic, wang2025vl} have been proposed to decompose multimodal problems into interpretable steps. Among them, \textsc{CoT}~\cite{chen2024m3cot}, \textsc{Grounded-CoT} \cite{Wu2025GroundedCF} and \textsc{ReAct} frameworks~\cite{yao2022react} are famous and insightful ones. %Chain-of-Thought (CoT)~\cite{chen2024m3cot} encourages models to generate intermediate steps before producing final answers. Grounded-CoT \cite{Wu2025GroundedCF} recently extends this idea by explicitly linking each reasoning step to the grounding coordinates. %, but faithfulness was not conidered, for example models behave unfaithfully, while our work aims to \textit{enforce} evidence before reasoning continues. 
%ReAct-style frameworks~\cite{yao2022react} further integrate reasoning with environment interaction, allowing models to iteratively query or retrieve external information. 
More recently, \textsc{VAT}~\cite{liu2025visual} explores hierarchical and compositional reasoning, encouraging models to form abstract visual concepts that support complex decision-making. Such generated reasoning chains, though often fluent and logically structured, may still include steps unsupported by visual evidence or inconsistent with the model's actual decision process~\cite{yu2024rlhf}. %More importantly, prior work has rarely incorporated \textbf{faithfulness} as an explicit design principle, leaving open the challenge of ensuring that each intermediate step remains perceptually grounded in the input before the next step continues.
%\vskip -0.20in

\noindent \textbf{Faithfulness in Multimodal Reasoning. } 
Despite the demonstrated effectiveness of reasoning in improving the performance of MLLMs, an unfaithful problem emerges: the reasoning traces generated by these models do \textbf{not always behave faithfully} to the internal processes that produce their final answers \cite{lyu2023faithful,turpin2023language,lanham2023measuring}.
Models often rely on latent knowledge or shortcut associations that are not explicitly expressed in their reasoning chains. 
%Empirical analyses have shown that such traces can be post-hoc rationalizations, fluent but unfaithful narratives, shortcut reasoning, or approximate reconstructions of what the model actually ``thinks'', rather than faithful records of how the model arrived at its \textbf{outputs}. 
As a result, the generated reasoning steps may read as a plausible but \textbf{untrustworthy explanation}, motivating the need for explicit faithfulness assessment \cite{jacovi2020towards,parcalabescu2023measuring,matton2025walk,barez2025chain,arcuschin2025chain,peng2025logic,peng2025stepwise}. 
Several works have introduced benchmarks or metrics to evaluate faithfulness explicitly. M$^4$~\cite{li2023m4}, \textsc{FaithScore}~\cite{jing2023faithscore} and \textsc{TIFA}~\cite{hu2023tifa} propose metrics for evaluating the faithfulness for vision-language models. However, we find that behavioral alignment (behavioral faithfully) does not guarantee the correctness of the final output (see \textit{left panel} in Fig.\ref{fig:motivation}). 

\noindent \textbf{Object Hallucination as Unfaithful Consequences.} Object hallucination is identified as a common challenge, as a consequence, in Large Vision-Language Models (LVLMs), where the model describes or reasons about objects absent in the input image. Several benchmarks, such as \dataset{POPE}~\cite{li2023evaluating} and \dataset{MMHal-Bench}~\cite{sun2024aligning}, have been developed to systematically evaluate this phenomenon. However, these efforts typically treat hallucination as an isolated failure mode rather than as a manifestation of broader \textbf{unfaithful reasoning}. %In contrast, we view hallucination as one instance of perceptual unfaithfulness, and hallucination can be mitigated by explicitly grounding objects within intermediate reasoning steps.

Recent studies address hallucination through different mechanisms: (i) \emph{training-time alignment}, such as hallucination-aware \textsc{Reinforcement Learning from Human Feedback} or preference optimization~\cite{sun2024aligning,zhang2024hallucination}; (ii) \emph{decoding-time constraints}, including grounded or contrastive decoding~\cite{li2023evaluating}; and (iii) \emph{feature-level grounding strategies} that enhance cross-modal alignment~\cite{ghosh2024vdgd}.
Together, these advances underscore that faithful reasoning requires not only linguistic coherence but also perceptual accountability, further motivating to verify evidential grounding before inference.

\section{Object Extraction Prompts and Examples}
\label{appendix:object_ext_prompt}
\begin{tcolorbox}[notitle, boxrule=0pt,left=0.05cm, right=0.05cm, top=0cm, bottom=0cm]
\textit{Extract all objects mentioned in the following sentence that may occur in an image. Only extract nouns meaning objects, not abstract adjectives, concepts, actions, general nouns or locations. Do not include non-object nouns or words like ``Image'', ``Object'', ``Feature'', or ``Photo''.  \textbackslash n \textbackslash n\#\#\#\{\textcolor{blue}{One Reasoning Step}\}\#\#\#  \textbackslash n  \textbackslash nReturn only a list of nouns like [``xxx'', ``xxx'', ``xxx''] and do not include any other things. If no available nouns, return an empty list [].}
\end{tcolorbox} 

\noindent \textbf{Example 1}:

\noindent Text: \textit{**Location Context**: The presence of a coastal area with a beach and a city in the background suggests a location near the ocean.}

\noindent Extract Result: \textit{[``coastal area'', ``beach'', ``city'']}

\noindent \textbf{Example 2}: 

\noindent Text: \textit{**Setting**: The image appears to be taken on a city street, likely in an urban area given the presence of taxis and buildings in the background.}

\noindent Extract Result: \textit{[``taxis'', ``buildings'']}

\section{Training Details on POPE}
\label{appendix:clip_training}
The training set of \dataset{POPE} released on Huggingface is \href{https://huggingface.co/datasets/lmms-lab/POPE}{here}. It consists of three parts, random, popular and adversarial, all with image-object existence labels. The total training size is 9000 items. We utilize the released test set for testing model performance.

The CLIP+polling head model is trained on two NVIDIA RTX4090 48GB GPUs. During training, we freeze the backbone \textsc{CLIP}, and only set the head trainable. We set the batch size to $32$ each GPU, learning rate to $1e-3$, and train $50$ epochs with early stop. The final test accuracy of the model is $99.80\%$, with $9000$ real-world test examples in POPE. We test the trained model in the wild, too. And we discover that it can reliably tell whether an object exists in an image. Thus, we can safely use it in our preference polling task.

%\section{Prompts for OoI Extraction}
%\label{appendix:ooi_prompts}

\section{Prompts for \textsc{FaithAct}}
\label{appendix:f_plan_prompts}
\begin{tcolorbox}[notitle, boxrule=0pt,left=0.05cm, right=0.05cm, top=0cm, bottom=0cm]
\textit{Question: \{your original question\}.\textbackslash n \textbackslash n Model Response: \{MLLM's original response\}\textbackslash n \textbackslash n Additional location information:\textbackslash n \textbackslash n \{Information from the functions\}\textbackslash n \textbackslash n Using only the ``exists'' objects with high confidence and avoid using objects that do not exist. Do not include new objects or descriptions. Do not repeat the evidences, confidence scores and bounding boxes in your reasoning. Think step by step. Steps should be like: 1.<object1>:xxx\textbackslash n \textbackslash n  2.<object2>:xxx\textbackslash n \textbackslash n ...\textbackslash n \textbackslash n ..., .}
\end{tcolorbox} 

\section{Algorithm of \textsc{FaithAct}}
\label{appendix:algo}
Here we list the algorithm process of \textsc{FaithAct} in Algorithm~\ref{algo:fplan}.
\begin{algorithm}[t]
\caption{Faithfulness-First Planner (\textsc{FaithAct})}
\label{algo:fplan}
\begin{algorithmic}[1]
\REQUIRE Image $I$, textual query $Q$, MLLM planner $M$, helper LLM $H_M$
\ENSURE Faithful reasoning output $R$

\STATE \textbf{// Step 1: Initial Reasoning}
\STATE $R_{\text{raw}} \gets M.\text{Reason}(I, Q)$

\STATE \textbf{// Step 2: Extract Claimed Objects}
\STATE $O_{\text{raw}} \gets H_M.\text{ExtractObj}(R_{\text{raw}})$

\STATE \textbf{// Step 3: Verification and Function Calls}
\FOR{each object $o_i \in O_{\text{raw}}$}
    \STATE $c_{p}^i \gets \texttt{Poll}(o_i)$ \COMMENT{Existence confidence}
    \STATE $B_i, c_{g}^i \gets \texttt{Ground}(o_i)$ \COMMENT{Bounding boxes and spatial scores}
    \STATE $f^i \gets \texttt{Select}(c_{p}^i, c_{g}^i)$ or $\texttt{Abstain}(c_{p}^i, c_{g}^i)$ \COMMENT{Faithfulness threshold check}
    \IF{\texttt{Select}}
        \STATE $n_i \gets \texttt{Count}(B_i, c_{g}^i)$ \COMMENT{Count reliably grounded instances}
        \STATE \textbf{record} $(o_i, f^i, n_i, B_i)$
    \ENDIF
\ENDFOR

\STATE \textbf{// Step 4: Refine-Based Faithful Reasoning}
\STATE $R_{\text{new}} \gets M.\text{Reason}(I, Q, O_{\text{raw}}, \{f^i, B_i, n_i\})$

\STATE \textbf{return} $R_{\text{new}}$
\end{algorithmic}
\end{algorithm}

\section{Proof of faithfulness by FaithAct v.s. ReAct}
\label{appendix:theory}
\begin{lemma}[Faithfulness Dominance of FaithAct over ReAct]
Let $R^{\text{ReAct}} = \{ s_t^{\text{ReAct}} \}_{t=1}^{T}$ and $R^{\text{FaithAct}} = \{ s_t^{\text{FaithAct}} \}_{t=1}^{T'}$ denote the reasoning chains generated by ReAct and FaithAct, respectively. 
Let $F_{\text{step}}(s_t)$ be the perceptual faithfulness of step $s_t$, and define the chain-level faithfulness as
\begin{equation}
    F_{\text{chain}}(R) = \frac{1}{|R|}\sum_{t=1}^{|R|} F_{\text{step}}(s_t).
\end{equation}
Assume that FaithAct refines each candidate step $s_t^{(k)}$ using verified evidence such that 
\begin{equation}
    F_{\text{step}}\big(s_t^{(k+1)}\big) \ge F_{\text{step}}\big(s_t^{(k)}\big),
    \label{eq:monotone_refine}
\end{equation}
and accepts only refined steps satisfying $F_{\text{step}}(s_t) \ge c$ for some threshold $c \in [0,1]$. Then
\begin{equation}
    F_{\text{chain}}\!\left(R^{\text{FaithAct}}\right) \ge F_{\text{chain}}\!\left(R^{\text{ReAct}}\right).
\end{equation}
\end{lemma}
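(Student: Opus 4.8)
The plan is to bound the two chain-level quantities separately and then compare. First I would establish a lower bound for $F_{\text{chain}}(R^{\text{FaithAct}})$: by the acceptance rule, every step $s_t^{\text{FaithAct}}$ that appears in the final chain satisfies $F_{\text{step}}(s_t^{\text{FaithAct}}) \ge c$, so averaging over the $T'$ accepted steps gives $F_{\text{chain}}(R^{\text{FaithAct}}) \ge c$ directly. The monotone-refinement hypothesis \eqref{eq:monotone_refine} is what guarantees the acceptance loop can always reach such a step (a raw step below $c$ is refined upward rather than discarded), so it is needed to ensure the chain is well-defined, but the key numerical consequence is simply the pointwise lower bound $F_{\text{step}}(s_t^{\text{FaithAct}}) \ge c$ for all retained $t$.

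Next I would upper-bound $F_{\text{chain}}(R^{\text{ReAct}})$. Since ReAct performs no faithfulness filtering, its steps carry no such guarantee; the natural assumption to invoke (implicitly used by the lemma) is that the average faithfulness of an unfiltered chain does not exceed the threshold $c$ that separates ``acceptable'' from ``unacceptable'' evidential grounding — i.e. $F_{\text{chain}}(R^{\text{ReAct}}) \le c$. Concretely one argues that ReAct admits at least some steps with $F_{\text{step}} < c$ (otherwise it would coincide with a faithfulness-constrained planner), and in the regime where the threshold is calibrated as the mean faithfulness level, these drag the average to at most $c$. Chaining the two bounds yields
\begin{equation}
    F_{\text{chain}}(R^{\text{FaithAct}}) \ge c \ge F_{\text{chain}}(R^{\text{ReAct}}),
\end{equation}
which is the claim.

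An alternative, more robust route I would consider is a step-by-step coupling argument: pair each ReAct step $s_t^{\text{ReAct}}$ with the corresponding FaithAct step generated from the same reasoning position but passed through the refine-and-verify loop. By \eqref{eq:monotone_refine} applied iteratively, $F_{\text{step}}(s_t^{\text{FaithAct}}) \ge F_{\text{step}}(s_t^{(0)}) = F_{\text{step}}(s_t^{\text{ReAct}})$ when the initial candidate is the ReAct step itself; summing and dividing by $T$ gives the inequality when $T' = T$. The subtlety — and what I expect to be the main obstacle — is that FaithAct may change the \emph{number} of steps ($T' \ne T$), since refinement can merge, split, or drop steps, so a clean one-to-one coupling is not automatic. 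Handling this honestly requires either (i) an extra assumption that abstained/dropped steps had faithfulness below the retained average (so removing them only raises the mean), or (ii) falling back to the threshold argument above, which sidesteps the alignment issue entirely by comparing both chains to the common constant $c$. I would present the threshold argument as the main proof and remark on the coupling view as intuition, flagging the $T' \ne T$ case as the place where the assumptions are doing real work.
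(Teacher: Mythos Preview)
Your primary argument---the threshold sandwich $F_{\text{chain}}(R^{\text{FaithAct}}) \ge c \ge F_{\text{chain}}(R^{\text{ReAct}})$---has a genuine gap on the right-hand side. Nothing in the lemma's hypotheses bounds ReAct's average faithfulness by $c$; the assumptions speak only of FaithAct's refinement and acceptance rule. Your justification (``ReAct admits at least some steps with $F_{\text{step}} < c$ \ldots\ these drag the average to at most $c$'') is not valid: a mean can easily exceed $c$ even with some terms below $c$, and the phrase ``the threshold is calibrated as the mean faithfulness level'' is an additional assumption you are importing, not something the lemma states or the paper uses. So the threshold route, as written, does not prove the claim.

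Your \emph{alternative} route---the step-by-step coupling---is in fact the paper's actual argument, and you should promote it to the main proof. The paper pairs each subgoal $g$ so that FaithAct's initial candidate $s_g^{(0)}$ coincides with $s_g^{\text{ReAct}}$ (same underlying MLLM), then applies monotone refinement to get $F_{\text{step}}(s_g^{\text{FaithAct}}) \ge F_{\text{step}}(s_g^{\text{ReAct}})$ for every paired $g$. For the $T' \ne T$ case you correctly flag, the paper handles it exactly via your option~(i): an \texttt{Abstain()} removes a low-faithfulness step, and deleting a below-average element from a multiset can only weakly raise the mean. So your instincts about where the work lies were right; you just had the two arguments ranked backwards.
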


\begin{proof}
For each semantic subgoal $g$, let $s_g^{\text{ReAct}}$ denote the step generated by ReAct and $s_g^{(0)}$ the initial unverified step proposed by the same MLLM within FaithAct. 
FaithAct refines $s_g^{(0)}$ through iterative verification:
\[
s_g^{(0)} \rightarrow s_g^{(1)} \rightarrow \cdots \rightarrow s_g^{(K_g)} = s_g^{\text{FaithAct}}.
\]
By monotonicity in~\eqref{eq:monotone_refine},
\begin{equation}
    F_{\text{step}}\!\left(s_g^{\text{FaithAct}}\right)
    \ge
    F_{\text{step}}\!\left(s_g^{(0)}\right)
    =
    F_{\text{step}}\!\left(s_g^{\text{ReAct}}\right),
    \forall g.
    \label{eq:stepwise_dominance}
\end{equation}

If FaithAct drops an unverified claim (via \texttt{Abstain()}), it effectively removes a low-faithfulness step, which cannot decrease the average of the remaining step scores. 
Let $A$ and $B$ denote the multisets of step scores in FaithAct and ReAct, respectively. 
Then every element in $A$ dominates or replaces an element in $B$ with greater or equal score. 
Removing low-valued elements weakly increases the mean, hence
\[
\frac{1}{|A|}\sum_{a\in A} a \ge \frac{1}{|B|}\sum_{b\in B} b.
\]
By definition, this is equivalent to
\[
F_{\text{chain}}\!\left(R^{\text{FaithAct}}\right)
\ge
F_{\text{chain}}\!\left(R^{\text{ReAct}}\right).
\]
\end{proof}

\noindent
This result follows directly from FaithAct’s \emph{verify-and-refine} constraint: 
each reasoning step is either (i) retained and refined until it is perceptually grounded, or (ii) rejected through \texttt{Abstain()} if unsupported, ensuring that no unverified or hallucinated step reduces overall faithfulness.

\begin{corollary}[Strict Improvement Under Unfaithful Steps]
Under the assumptions of Lemma 1, suppose there exists at least one subgoal $g^\star$ such that the ReAct step $s_{g^\star}^{\text{ReAct}}$ is perceptually unfaithful, i.e.,
\begin{equation}
    F_{\text{step}}\!\left(s_{g^\star}^{\text{ReAct}}\right) < 1.
\end{equation}
Assume further that FaithAct either (i) refines this step into a perceptually grounded step $s_{g^\star}^{\text{FaithAct}}$ with
\begin{equation}
    F_{\text{step}}\!\left(s_{g^\star}^{\text{FaithAct}}\right) 
    >
    F_{\text{step}}\!\left(s_{g^\star}^{\text{ReAct}}\right),
\end{equation}
or (ii) rejects the claim via \texttt{Abstain()}, thereby removing $s_{g^\star}^{\text{ReAct}}$ entirely from its chain.
Then
\begin{equation}
    F_{\text{chain}}\!\left(R^{\text{FaithAct}}\right)
    >
    F_{\text{chain}}\!\left(R^{\text{ReAct}}\right).
\end{equation}
\end{corollary}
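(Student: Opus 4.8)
The plan is to prove the Corollary as a strengthening of Lemma 1: re-run the same averaging argument, but track the single distinguished subgoal $g^\star$ to upgrade one of the weak inequalities to a strict one. First I would recall from the proof of Lemma 1 that we obtain a pairing between the multiset $A$ of FaithAct step scores and the multiset $B$ of ReAct step scores such that (a) every score in $B$ is matched either to a score in $A$ that weakly dominates it, or is dropped entirely via \texttt{Abstain()}, and (b) dropping a below-average element weakly increases the mean. The Corollary gives us one extra hypothesis: the ReAct step $s_{g^\star}^{\text{ReAct}}$ has $F_{\text{step}} < 1$, and FaithAct either strictly improves it or removes it.

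Next I would split into the two cases named in the hypothesis. In case (i), the matched pair for $g^\star$ satisfies $F_{\text{step}}(s_{g^\star}^{\text{FaithAct}}) > F_{\text{step}}(s_{g^\star}^{\text{ReAct}})$ strictly, while all other matched pairs satisfy the weak inequality from \eqref{eq:stepwise_dominance}; summing and dividing by $|A| = |B|$ (no abstentions in this sub-case, so the chain lengths agree) yields a strict inequality on the means. In case (ii), $s_{g^\star}^{\text{ReAct}}$ is removed; I would invoke the elementary fact that if one deletes an element strictly below the current mean, the mean strictly increases — and here $F_{\text{step}}(s_{g^\star}^{\text{ReAct}}) < 1$ together with all scores lying in $[0,1]$ forces it to be strictly below the mean unless every remaining score equals $1$, in which case the mean is already $1 > F_{\text{chain}}(R^{\text{ReAct}})$. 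Combining the strict gain at $g^\star$ with the weak dominance (Lemma 1) on the remaining steps gives the strict chain-level inequality.

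The main obstacle I anticipate is the boundary bookkeeping in case (ii): one must rule out the degenerate situation where deleting $s_{g^\star}^{\text{ReAct}}$ leaves the mean unchanged, which can only happen if that step's score already equalled the mean of the rest. Since $F_{\text{step}}(s_{g^\star}^{\text{ReAct}}) < 1$ and all scores are at most $1$, this case still forces $F_{\text{chain}}(R^{\text{ReAct}}) < 1$, and one then argues that FaithAct's chain — whose every accepted step has $F_{\text{step}} \ge c$ and, by the Lemma's dominance, at least matches ReAct's already — picks up the strict slack. A second minor subtlety is that in case (ii) the chain lengths differ ($|R^{\text{FaithAct}}| = |R^{\text{ReAct}}| - 1$ or fewer), so the comparison of means is not term-by-term; I would handle this exactly as in the Lemma, by first passing to the sub-multiset of $B$ that survives and applying the weak dominance there, then separately accounting for the removed low-faithfulness element. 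Everything else is a routine re-tracing of the Lemma's argument with one inequality made strict.
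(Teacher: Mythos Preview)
Your proposal is correct and follows essentially the same route as the paper: a two-case split on whether FaithAct refines or abstains at $g^\star$, with case (i) handled by the ``one term strictly larger, all others weakly larger $\Rightarrow$ mean strictly larger'' observation, and case (ii) handled by the effect on the mean of deleting a sub-unity element. Your discussion of the degenerate boundary in case (ii) is in fact more explicit than the paper's, which simply asserts that removing a strictly sub-maximal element (with the remaining elements not all equal to it) strictly raises the mean.
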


\begin{proof}
Case (i): If FaithAct refines $s_{g^\star}^{\text{ReAct}}$ into $s_{g^\star}^{\text{FaithAct}}$ with strictly higher step-level faithfulness, then by the stepwise dominance in~\eqref{eq:stepwise_dominance}, FaithAct strictly improves at least one term in the average while leaving all other terms weakly improved. The mean of a set of real numbers strictly increases if at least one element increases and no element decreases. Hence $F_{\text{chain}}(R^{\text{FaithAct}}) > F_{\text{chain}}(R^{\text{ReAct}})$.

Case (ii): If FaithAct abstains on $g^\star$, then ReAct includes a low-faithfulness step $s_{g^\star}^{\text{ReAct}}$ in its average, while FaithAct omits it. Removing a strictly sub-maximal element from an arithmetic mean strictly increases that mean, provided the remaining elements are not all equal to that element. Since $F_{\text{step}}(s_{g^\star}^{\text{ReAct}}) < 1$ by assumption, this condition holds. Therefore the average step score of FaithAct is strictly higher than that of ReAct.

In both cases,
\[
F_{\text{chain}}\!\left(R^{\text{FaithAct}}\right)
>
F_{\text{chain}}\!\left(R^{\text{ReAct}}\right).
\]
\end{proof}

Empirically (Table 1), we observe that the inequality is typically strict, consistent with Corollary 1: whenever ReAct produces at least one perceptually ungrounded step, FaithAct either corrects it using verified evidence or removes it rather than propagating hallucinated content.

\section{Human Validation of LLM-Extracted Objects}
\label{app:human_validation}

This section provides detailed methodology and analysis for the human validation study used to assess the accuracy of LLM-extracted objects described in Sec. \ref{subsec:objectext}.

\subsection{Study Design and Data Collection}

We randomly sampled 50 text snippets from our evaluation corpus. Each snippet was processed by the \code{Qwen} to extract a set of candidate objects that were intended to represent entities explicitly or implicitly stated in the text. To validate the correctness of these extracted objects, we conducted a human annotation study involving 25 annotators.

Each annotator independently evaluated all extracted objects for each snippet. For every object, annotators answered the binary question:
\emph{``whether the extracted object explicitly exists in the text.''}
Objects were labeled as \emph{supported} (1) if they were explicitly mentioned or unambiguously implied by the text, and as \emph{unsupported} (0) otherwise. Annotators were instructed to rely solely on the provided text and to avoid using external world knowledge. When uncertain, they were instructed to mark the object as unsupported.

In total, the study yielded 7,550 object-level labels, corresponding to 1,250 annotator--snippet pairs (50 snippets $\times$ 25 annotators).

\begin{table*}[t]
\centering
\begin{tabular}{l|cccc}
\hline
\textbf{Dataset \& Metrics} & \textbf{\dataset{LLaVA-bench}} & \textbf{\dataset{RealWorldQA}} & \textbf{\dataset{MMHal}} & \textbf{\dataset{POPE}} \\
\hline
Avg token length (FaithAct) & 270.88 & 171.43 & 112.85 & 149.11 \\
Avg token length (CoT)      & 357.52 & 162.78 & 200.06 & 148.23 \\
Min token length (FaithAct) & 30     & 17     & 12     & 28     \\
Min token length (CoT)      & 98     & 64     & 45     & 43     \\
Max token length (FaithAct) & 512    & 512    & 510    & 512    \\
Max token length (CoT)      & 512    & 432    & 502    & 512    \\
\hline
\end{tabular}
\caption{Token length statistics for FaithAct and CoT across different benchmarks.}
\label{tab:token_length_stats}
\end{table*}

\subsection{Evaluation Metrics}

We first evaluate precision at the object level by comparing LLM-extracted objects against human judgments. An extracted object is considered a false positive if it is labeled as unsupported by a human annotator. We compute object-level precision as
\begin{equation*}
\text{Precision} = \frac{\text{\# supported objects}}{\text{\# extracted objects}}.
\end{equation*}
Across all object-level annotations, the LLM achieves a precision of 99.42\%, with only 44 false positives, corresponding to an over-extraction rate of 0.58\%. 

While precision metrics capture local correctness, they do not reflect whether an entire set of extracted objects for a snippet is jointly accurate. To assess holistic correctness, we introduce \emph{snippet validity}, defined as an indicator function over annotator--snippet pairs:
\begin{equation*}
\small
\text{SV}(a, s) =
\mathbf{1}\big[\forall o \in \mathcal{O}_{s},\; o \text{ is supported by annotator } a\big],
\end{equation*}
where $\mathcal{O}_{s}$ denotes the set of objects extracted for snippet $s$.

We report mean snippet validity by averaging $\text{SV}(a, s)$ across all annotator--snippet pairs:
\begin{equation*}
\text{Mean SV} = \frac{1}{|\mathcal{A}||\mathcal{S}|}
\sum_{a \in \mathcal{A}} \sum_{s \in \mathcal{S}} \text{SV}(a, s).
\end{equation*}
Across 1,250 annotator--snippet pairs, the LLM achieves a mean snippet validity of 0.97. This indicates that for nearly all annotator--snippet evaluations, \emph{all} extracted objects for a snippet are judged correct.

\paragraph{Summary}
The combined object-level and snippet-level analyses provide complementary views of the choice of Qwen as an object extractor. High object-level precision and high snippet validity demonstrates that extracted object sets are almost always entirely accurate, serving as reliable inputs for downstream faithfulness evaluation in our framework.

\begin{figure*}[t]
    \centering
    \includegraphics[width=\linewidth]{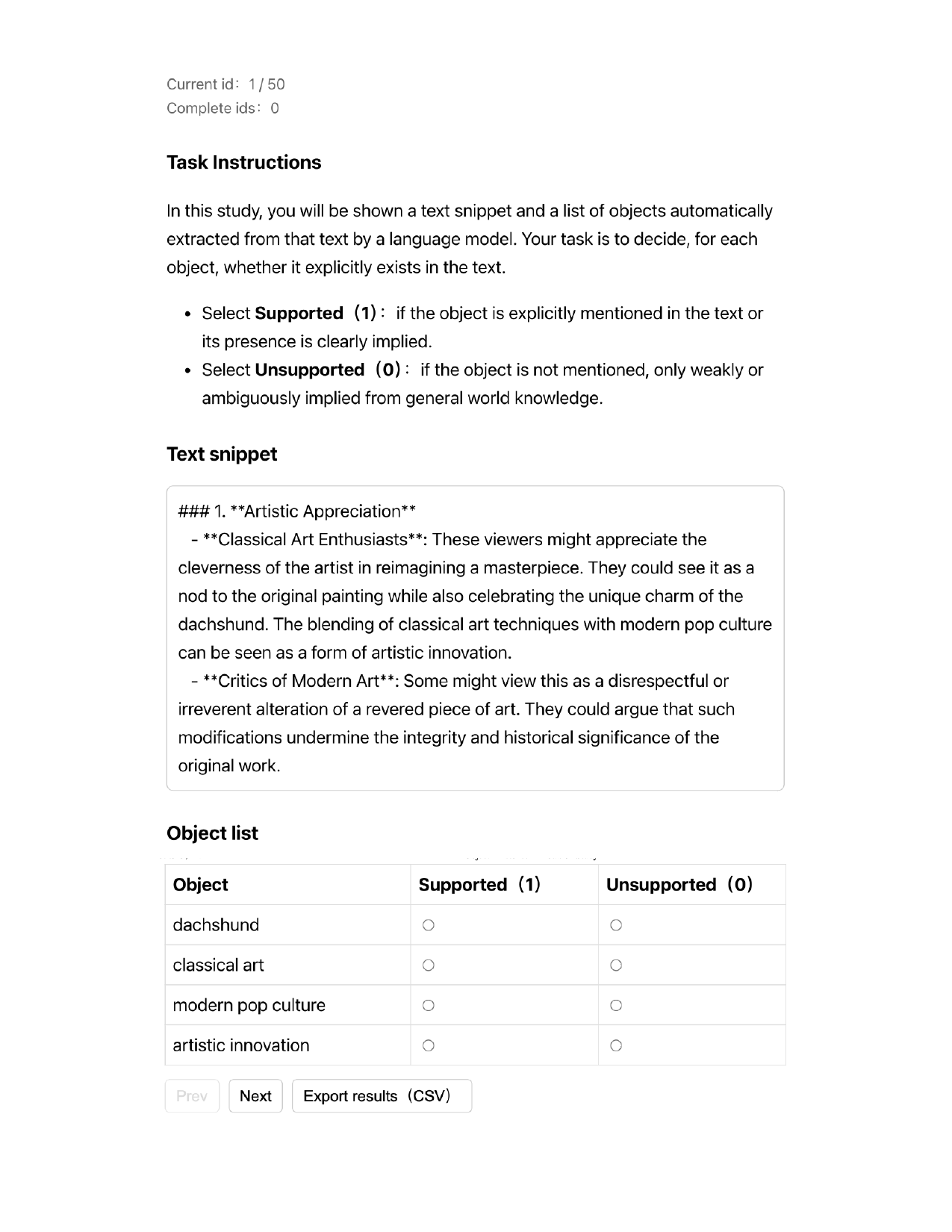}
    \caption{\textbf{Human annotation interface for object-existence validation.} For each snippet, annotators are presented with the original text and a set of objects automatically extracted by the LLM. Annotators judge whether each object is explicitly mentioned or unambiguously implied by the text, producing binary Supported (1) or Unsupported (0) labels used for evaluating extraction faithfulness.}
    \label{fig:human_study_case}
    % \vskip -0.15in
\end{figure*}

\section{Reasoning-Chain Length}
\label{appendix:chain_length}
We report the reasoning chain length of our method of Qwen-2.5-VL-7B in Table~\ref{tab:token_length_stats}, compared with raw CoT. The official tokenizer of the model is used. Here we report the metrics calculated from the final output of FaithAct. We find that FaithAct does not produce longer reasoning chains than raw CoT. In fact, on certain metrics, it yields shorter chains. A plausible explanation is that FaithAct constrains the model to reason only over visually verified objects with acceptable faithfulness scores. By filtering out unsupported or hallucinated objects, the framework thereby leads to more concise and focused chains.

\end{document}